\newcommand{\R}{\mathbb{R}}
\newcommand{\N}{\mathcal{N}}
\newcommand{\norm}[1]{\left\lVert#1\right\rVert}
\newtheorem{theorem}{Theorem}[section]
\theoremstyle{definition}
\theoremstyle{remark}
\title{A High-order Tuner for Accelerated Learning and Control\thanks{This work is supported by the Boeing Strategic University Initiative.}}
\author{Spencer McDonald \thanks{S. McDonald is with the Department of Aeronautics and Astronautics, Massachusetts Institute of Technology, Cambridge, MA, 02139 USA} \and Yingnan Cui \thanks{Y. Cui and A.M. Annaswamy are with the Department of Mechanical Engineering, Massachusetts Institute of Technology, Cambridge, MA, 02139 USA} \and Joseph E. Gaudio \thanks{J.E. Gaudio is with Aurora Flight Sciences, a Boeing Company, Cambridge MA 02140} \and Anuradha M. Annaswamy\footnotemark[2]}
\begin{document}
\maketitle
\begin{abstract}
	Gradient-descent based iterative algorithms pervade a variety of problems in estimation, prediction, learning, control, and optimization. Recently iterative algorithms based on higher-order information have been explored in an attempt to lead to accelerated learning. In this paper, we explore a specific a high-order tuner that has been shown to result in stability with time-varying regressors in linearly parametrized systems, and accelerated convergence with constant regressors. We show that this tuner continues to provide bounded parameter estimates even if the gradients are corrupted by noise. Additionally, we also show that the parameter estimates converge exponentially to a compact set whose size is dependent on noise statistics. As the HT algorithms can be applied to a wide range of problems in estimation, filtering, control, and machine learning, the result obtained in this paper represents an important extension to the topic of real-time and fast decision making.
\end{abstract}
\section{Introduction}
Parametric models are utilized in many fields of engineering to aid in analysis and make problems tractable; some examples include linear regression, support vector machines, neural networks, and many other techniques in machine learning \cite{Bishop_2006,Efron2016}. To determine the parameters, a loss function is chosen to measure the discrepancy between the model's predicted output and the measured output \cite{Wang2020}. To this end, gradient methods have become the central technique employed to iteratively update parameters to minimize the loss function. The investigation of these iterative algorithms has been a key research area in: the study of adaptive learning rates \cite{Duchi_2011,Kingma_2017,Wilson_2017}, time-scheduled learning rates \cite{Shalev_Shwartz_2011,Hazan_2016}, and higher order ``momentum'' based techniques \cite{Polyak_1964,Nesterov_1983,Wibisono_2016}. In particular, the iterative algorithm proposed by Nesterov \cite{Nesterov_1983} has received significant interest in both the optimization  \cite{Nesterov_2004,Beck_2009,Bubeck_2015,Carmon_2018,Nesterov_2018} and neural network communities \cite{Krizhevsky_2012,Sutskever_2013}  due to its provable guarantees of accelerated learning when restricted to the class of convex functions. 

Of interest to this current paper is the ``Higher Order Tuner'' developed in \cite{gaudio2020accelerated}. The general concept of iterative algorithms based on information other than the first-order gradients has been explored in several papers including \cite{Nesterov_1983}, with notables ones being \cite{Wibisono_2016,Su_2016,Wilson_2016,gaudio2020class}. These algorithms have been inspired by Nesterov's algorithm and its continuous-time equivalent as described in \cite{Su_2016}, with a demonstrated advantage of faster convergence in discrete time \cite{Wibisono_2016,Wilson_2016}. In the context of control problems that require real-time decision making, in addition to ensuring convergence, fast or slow, stability guarantees have to be ensured in the presence of real-time variations in the system. As dynamics can cause the underlying regressors in the problem to be time-varying and adversarial, high-order tuners that can not only lead to accelerated convergence but also are guaranteed to be stable are essential. Time varying regressors occur in many different scenarios, including multi-armed bandits \cite{Auer_1995,Auer_2002,Bubeck_2012}, adaptive filtering \cite{Goodwin_1984,Widrow_1985,Haykin_2014}, and temporal-prediction tasks \cite{Dietterich_2002,Kuznetsov_2015,Hall_2015}. The HT algorithm developed in \cite{gaudio2020accelerated} provides such a guarantee.


One of the practical considerations that was neglected is noise, which can corrupt the determination of gradients. The stochastic stability of discrete algorithms including Nesterov and Heavy Ball has been an area of active investigation in recent years when these algorithms are subjected to corrupted gradients \cite{Cohen18a,Lan2012,Devolder2013}. Gradient corruption can occur in many different ways, some examples include the case where the gradient is estimated from a subset of its components \cite{Atchade2014,Krichene2015,Jain2018}, the setting where noise is intentionally added to protect the privacy of the data \cite{Bassily2014}, and the case where noise occurs due to physical measurements \cite{Birand2013}. This paper extends on the work done in \cite{gaudio2020accelerated} to show stability and convergence rates in the presence of stochastic disturbances.

The tool set used in this work lies in the field of stochastic systems theory \cite{Kumar1986StochasticSE,Becker1985,Borkar1979}. Most deterministic discrete stability analysis is carried out using Lyapunov methods by showing that the Lypapunov function candidate decreases with every iteration \cite{Goodwin_1984} and invoking suitable convergence properties. The stochastic counterpart of these works applies the same tools to the expectation of the underlying stochastic processes and invoking subsequent real-analysis based arguments \cite{Kushner1967,Kushner1965, Kushner1971,Beutler1973}. The foundation of these stochastic Lyapunov analysis proofs relied on properties of Doob's martingale convergence theorem and LaSalle's arguments. One limiting assumption of these early works, however, required that the underlying systems be Markovian \cite{Kushner1967,Kushner1965,Kushner1971}. However, recent work in \cite{QIN2020} has expanded the stochastic Lyapunov analysis to include systems that are not Markovian and additionally show more general assumptions with regards to decreases in expectation. All of the existing work in discrete-time systems with time-varying regressors and noise including \cite{Cohen18a,Lan2012,Devolder2013}, however, have focused their attention only on first-order methods. In this paper, we propose the use of higher-order methods that deploy not only gradients but also the Hessian. 


The primary contribution of the paper is to show that the HT proposed in \cite{gaudio2020accelerated} is indeed stable even in the case when noise corrupts the gradient used to iteratively update the parameters. As in \cite{gaudio2020accelerated}, we focus our attention on regression models that are linear in the unknown parameters, and assume that the observations are corrupted by an additive noise component. By leveraging theoretical tools similar to \cite{QIN2020}, we guarantee that the parameter estimates are bounded. Additionally, when the underlying loss function is convex, we also show that the parameter estimates converge exponentially to a compact set whose size is dependent on noise statistics. As the HT algorithms can be applied to a wide range of problems in estimation, filtering, control, and machine learning, the result obtained in this paper represents an important step in real-time and fast decision making.

The remainder of this paper is organized as follows, in Section \ref{sec:problem}, we introduce the mathematical notation, problem statement, and the Higher Order Tuner algorithm; in Section \ref{sec:HT} we introduce the main result of the paper and the theoretical significance; in Section \ref{sec:stability} we discuss the stability and convergence proofs of the algorithms; in Section \ref{sec:conclusion}, concluding remarks are given. 

\section{Problem Statement and Preliminaries}
\label{sec:problem}
Let \{$\eta_k$\} be a stochastic process on probability space ($\Omega$, $\mathcal{F}$, $\mathbb{P}$) where $\eta_k$ satisfies:
\begin{gather} 
	E(\eta_{k+1} | \mathcal{F}_{k}) = d_{k+1} \,\, a.s. \label{eq:mean}\\
	E(\eta_{k+1}^2 | \mathcal{F}_{k}) = \sigma_{k+1}^2 \,\, a.s. \label{eq:variance}
\end{gather}
where $\mathcal{F}_{k}$ is a filtration on the sub-sigma increasing algebra group of $\sigma_k(\theta_1,\vartheta_1, ..., \theta_k,\vartheta_k)$ (i.e., the filtration $\mathcal{F}_{k}$ assumes we know all values of $\theta_i$ and $\vartheta_i$, $\forall i \leq k$)\cite{QIN2020}. We also assume that
\begin{gather} 
	|d_k | \leq d_{max} < \infty, \,\, \forall k \,\, a.s. \label{eq:meanB}\\
	\sigma_k^2 \leq \sigma_{max}^2 < \infty, \,\, \forall k \,\, a.s. \label{eq:varianceB}
\end{gather}

With the noise characteristics defined as above, we now state the problem that will address in this paper: Suppose that there is a vector of measurements $\phi_k\in\R^N$ that produces a noisy output $y_k\in\R$ defined by the following linear regression model
\begin{equation}\label{eq:noisy_model_output}
	y_{k+1}=\phi_k^T\theta^*+\eta_{k+1}.
\end{equation}
where $\theta^*\in\R^N$ represents a vector of constant unknown parameters and needs to be estimated. A recursive estimator of the form 
\begin{equation}\label{eq:estimate}
	\hat{y}_{k+1}=\theta^T_k\phi_k.
\end{equation}
is proposed for the estimation, with $\theta_k\in\R^N$ and $\hat{y}_k\in\R$ as the estimates of the parameter, and output respectively. The regressor model and the estimator lead to an output error equation 
\begin{equation}
	\label{eq:noisefree_errormodel}
	e_{y,k+1} = \tilde{\theta}^T_k\phi_k-\eta_{k+1}
\end{equation}
where $\tilde{\theta}_k\in\R^n$ is defined as $\tilde{\theta}_k = \theta_k - \theta^*$ and $e_{y,k}= \hat{y}_{k}-y_{k}$. This leads to a loss function, $L_{k+1}$ 
\begin{equation}\label{eq:noisefree_loss}
	L_{k+1}(\theta_k) = \frac{1}{2}\lVert\tilde{\theta}^T_k\phi_k\rVert^2. 
\end{equation}
As the presence of noise is unknown to the designer, the gradient of the loss function can be estimated only as
\begin{equation} \label{eq:noisefree_gradient}
	\nabla L_{k+1}(\theta_k) \approx \phi_k e_{y,k+1},
\end{equation}
leading to the well known gradient-based recursive estimation given by
\begin{equation}
	\theta_{k+1}=\theta_k-\gamma \nabla L_{k+1}(\theta_k).
	\label{eq:gradient}
\end{equation}
The stability and convergence properties of the underlying Markovian process given by \eqref{eq:noisy_model_output}-\eqref{eq:noisefree_gradient} have been studied at length in \cite{QIN2020,AHMED1972276,Kushner2014}, and is summarized in the two theorems below. The reader is referred to \cite{QIN2020} for all definitions of stability, asymptotic stability in probability, and for the definition of almost surely exponentially stable in the large. 
\begin{theorem} \label{thm:stochasticstable}
	Let $\{X_k\}_{k\geq1}$ be a stochastic process and let $V: \mathbb{R}^n \to \mathbb{R}$ be a continuous, nonnegative, and radially unbounded function. Define the set $D_{\lambda} \coloneqq  \{x: \,\, V(x) < \lambda\}$ for some $\lambda > 0$, and assume that for any k, $E(V(X_{k+1} )| \mathcal{F}_k) - V(X_k) \leq -\psi(X_k)$, where $\psi: \mathbb{R}^n \to \mathbb{R}$ is continuous and satisfies $\psi(x) \geq 0$ for any $x \in D_{\lambda}$.
	The following results can be stated:
	\begin{enumerate}
		\item For any initial condition $X_0 \in D_{\lambda}$, $X_k$ converges to $D_0 \coloneqq \{x \in D_{\lambda} : \psi(x) = 0\}$ with probability at least $1-\frac{V(X_0)}{\lambda}$;
		\item If moreover $\psi(x)$ is positive definite on $D_{\lambda}$, and $h_1(\norm{s}) \leq V (s) \leq h_2 (\norm{s}) $ for two class $\kappa$ functions $h_1$ and $h_2$, then $x = 0$ is asymptotically stable in probability.
	\end{enumerate}
\end{theorem}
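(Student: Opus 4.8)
The plan is to treat $V(X_k)$ as a near-supermartingale and to exploit that $V$ is nonnegative and radially unbounded. First I would introduce the exit time $\tau \coloneqq \inf\{k \geq 0 : X_k \notin D_\lambda\}$ and work with the stopped process $W_k \coloneqq V(X_{k \wedge \tau})$. On the event $\{k < \tau\}$ the hypothesis gives $E(V(X_{k+1}) \mid \mathcal{F}_k) \leq V(X_k) - \psi(X_k)$ with $\psi(X_k) \geq 0$ (since $X_k \in D_\lambda$), while on $\{k \geq \tau\}$ the process is frozen; hence $\{W_k\}$ is a nonnegative supermartingale satisfying $E(W_{k+1}\mid\mathcal{F}_k) \leq W_k - \psi(X_k)\mathbf{1}_{\{k<\tau\}}$.

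From here two standard tools do most of the work. Doob's maximal inequality for nonnegative supermartingales gives $\mathbb{P}(\sup_k W_k \geq \lambda) \leq E[W_0]/\lambda = V(X_0)/\lambda$; since $X_0 \in D_\lambda$ while $X_\tau \notin D_\lambda$ forces $V(X_\tau) \geq \lambda$ on $\{\tau < \infty\}$, this yields $\mathbb{P}(\tau = \infty) \geq 1 - V(X_0)/\lambda$. The supermartingale convergence theorem gives $W_k \to W_\infty$ a.s. with $W_\infty$ finite. Telescoping expectations of the displayed inequality and using $W_n \geq 0$ gives $\sum_{k\geq 0} E[\psi(X_k)\mathbf{1}_{\{k<\tau\}}] \leq V(X_0) < \infty$, so $\sum_{k\geq0}\psi(X_k)\mathbf{1}_{\{k<\tau\}} < \infty$ a.s. and in particular $\psi(X_k)\mathbf{1}_{\{k<\tau\}} \to 0$ a.s.

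For part 1, restrict to the event $\{\tau = \infty\}$, which has probability at least $1 - V(X_0)/\lambda$. There $V(X_k) \to W_\infty < \infty$, so radial unboundedness of $V$ confines $\{X_k\}$ to a compact set, and $\psi(X_k)\to 0$; continuity of $\psi$ then forces every limit point $x^\star$ of $\{X_k\}$ to satisfy $\psi(x^\star) = 0$ and $V(x^\star) \leq \lambda$, i.e. $\mathrm{dist}(X_k, D_0) \to 0$ with $D_0$ read as the appropriate closed set. For part 2, positive definiteness of $\psi$ on $D_\lambda$ collapses $D_0$ to $\{0\}$, so part 1 already gives $X_k \to 0$ with probability at least $1 - V(X_0)/\lambda \geq 1 - h_2(\norm{X_0})/\lambda$, which tends to $1$ as $\norm{X_0}\to 0$ (the attractivity half of asymptotic stability in probability). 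For the Lyapunov-stability half, fix $\varepsilon$ small enough that $h_1(\varepsilon) < \lambda$; applying Doob's maximal inequality at level $h_1(\varepsilon)$ to $W_k$, together with $\norm{x}\geq\varepsilon \Rightarrow V(x)\geq h_1(\varepsilon)$ and $V(X_0)\leq h_2(\norm{X_0})$, gives $\mathbb{P}(\sup_k \norm{X_{k\wedge\tau}} \geq \varepsilon) \leq h_2(\norm{X_0})/h_1(\varepsilon)$, which is made arbitrarily small by choosing $\norm{X_0}$ small.

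The main obstacle I anticipate is not any individual estimate but the bookkeeping around the stopped process: the supermartingale and telescoping arguments are valid only while the trajectory remains in $D_\lambda$, and the ``convergence to $D_0$'' assertion must be phrased with the correct (closed) set so as to accommodate the possibility that $W_\infty$ lands on the boundary $V = \lambda$. A secondary subtlety is checking that the measurability and integrability hypotheses of Doob's maximal and convergence theorems hold under the paper's general, not-necessarily-Markovian filtration $\{\mathcal{F}_k\}$ — precisely the point where the cited reference's extension is needed.
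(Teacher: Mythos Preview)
Your argument is the standard supermartingale route (stopping time at exit from $D_\lambda$, Doob's maximal inequality for the escape bound, martingale convergence plus telescoping for $\sum\psi(X_k)<\infty$, then a compactness/continuity argument), and it is essentially correct. Note, however, that the paper does not supply its own proof of this theorem: it simply refers the reader to \cite{QIN2020} (and to \cite{AHMED1972276,Kushner2014} for background). So there is no in-paper proof to compare against; what you have written is precisely the classical Kushner-style argument that those references develop, and your identification of the two delicate points---the bookkeeping on the stopped process and the boundary case $W_\infty=\lambda$ when interpreting $D_0$---matches the known subtleties in that literature.
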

The reader is referred to \cite{QIN2020} for a proof, and to \cite{AHMED1972276,Kushner2014} for the fundamentals of Markovian processes and properties of convergence. The following theorem derives stronger stability properties of \eqref{eq:noisy_model_output}-\eqref{eq:noisefree_gradient} under stronger conditions on the function $V$.
\begin{theorem} \label{thm:stochasticrate}
	Let $\{X_k\}_{k\geq1}$ be a stochastic process and let $V: \mathbb{R}^n \to \mathbb{R}$ be a continuous, nonnegative, and radially unbounded function. Assume that we have $E(V(X_{k+1} )| \mathcal{F}_k) - V(X_k) \leq -\alpha V(X_k)$, where $0 < \alpha < 1$.
	The following results can be stated:
	\begin{enumerate}
		\item For any initial condition $X_0$, we have that $V(X_k)$ converges to 0 exponentially fast at a rate no slower than $1-\alpha$. (\cite{Kushner1971}, Thm. 2, Chapter 8]) and \cite{Kushner2014};
		\item If moreover $V$ satisfies $c_1 \norm{x}^a \leq V (x) \leq c_2 \norm{x}^a$ for some $c_1,c_2,a > 0$, then $X=0$ is globally almost surely exponentially stable.
	\end{enumerate}
\end{theorem}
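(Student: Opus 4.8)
The plan is to convert the ``decrease in conditional mean by a fixed fraction'' encoded in the hypothesis into a genuine supermartingale by rescaling time. I would define the auxiliary process
\begin{equation}
W_k \coloneqq (1-\alpha)^{-k}\,V(X_k), \qquad k \geq 0,
\end{equation}
which is nonnegative since $V \geq 0$ and $0 < \alpha < 1$. Because $(1-\alpha)^{-(k+1)}$ is a deterministic (hence $\mathcal{F}_k$-measurable) constant, the hypothesis $E(V(X_{k+1})\mid\mathcal{F}_k) \leq (1-\alpha)V(X_k)$ gives $E(W_{k+1}\mid\mathcal{F}_k) = (1-\alpha)^{-(k+1)}E(V(X_{k+1})\mid\mathcal{F}_k) \leq (1-\alpha)^{-k}V(X_k) = W_k$, so $\{W_k\}$ is a nonnegative supermartingale; iterating in expectation yields $E(W_k)\leq W_0 = V(X_0) < \infty$ (finite because $X_0$ is a fixed initial condition), so $\{W_k\}$ is $L^1$-bounded.

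Then, by Doob's supermartingale convergence theorem, $W_k$ converges almost surely to a finite nonnegative random variable $W_\infty$; in particular almost every sample path is bounded, $M \coloneqq \sup_{k\geq 0} W_k < \infty$ a.s. Unwinding the rescaling, $V(X_k) = (1-\alpha)^k W_k \leq M\,(1-\alpha)^k$, which already gives the first claim: $V(X_k)\to 0$ a.s., and for every $\rho \in (1-\alpha,1)$ we have $\rho^{-k}V(X_k) \leq M\bigl((1-\alpha)/\rho\bigr)^k \to 0$, i.e. $\limsup_{k\to\infty}\tfrac1k\log V(X_k) \leq \log(1-\alpha)$ almost surely; the companion in-expectation bound $E(V(X_k)) \leq (1-\alpha)^k V(X_0)$ follows even more directly from the tower property. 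This is the asserted decay ``no slower than $1-\alpha$.''

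For the second part, I would use the two-sided bound $c_1\|x\|^a \leq V(x) \leq c_2\|x\|^a$ to transfer the decay of $V$ to the state: from $c_1\|X_k\|^a \leq V(X_k) \leq M\,(1-\alpha)^k$ one gets $\|X_k\| \leq (M/c_1)^{1/a}(1-\alpha)^{k/a}$ for all $k$, hence
\begin{equation}
\limsup_{k\to\infty}\frac1k\log\|X_k\| \leq \frac1a\log(1-\alpha) < 0 \qquad \text{a.s.},
\end{equation}
with a rate bound independent of the initial condition, which is precisely global almost sure exponential stability of $X=0$ (with Lyapunov exponent at most $\tfrac1a\log(1-\alpha)$).

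The calculations here are short, so the main obstacle I anticipate is not analytical but a matter of stating the conclusion in the right sense. In general there is no deterministic constant $C$ with $V(X_k)\leq C(1-\alpha)^k$ surely --- the multiplicative constant $M$ (equivalently $W_\infty$) is only almost surely finite --- so exponential convergence must be phrased either in expectation or as an almost sure bound on the exponential growth rate (the $\limsup$ form above), which is the form appearing in \cite{Kushner1971,Kushner2014}. One must also check integrability so that Doob's theorem applies, which is automatic for a deterministic $X_0$ but would require $E(V(X_0)) < \infty$ for a random one; and one must be careful to use the contraction factor $1-\alpha < 1$ consistently in the rescaling and to carry the exponent $1/a$ correctly through the radial bounds in the second part.
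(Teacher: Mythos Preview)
Your argument is correct and is precisely the standard supermartingale proof one finds in the references the paper cites. Note, however, that the paper itself does not supply a proof of this theorem: it simply states the result and refers the reader to \cite{QIN2020} (and to \cite{Kushner1971,Kushner2014} for background), so there is no ``paper's own proof'' to compare against. Your rescaling $W_k = (1-\alpha)^{-k}V(X_k)$, the verification that $\{W_k\}$ is a nonnegative supermartingale, the appeal to Doob's convergence theorem to obtain an a.s.\ finite $\sup_k W_k$, and the transfer to $\|X_k\|$ via the two-sided radial bound are exactly the steps underlying the cited results, and your caveats about the random multiplicative constant and the need for $E(V(X_0))<\infty$ in the random-initial-condition case are well placed.
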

The reader is referred to \cite{QIN2020} for a proof, and to \cite{AHMED1972276,Kushner2014} for the fundamentals of Markovian processes and properties of convergence.

\section{The High-order Tuner Algorithm}
\label{sec:HT}
We now state the main result of this paper. We consider the problem of estimation of $\theta^*$ in \eqref{eq:noisy_model_output} using a recursive estimator in \eqref{eq:estimate}. Instead of the gradient algorithm in \eqref{eq:gradient}, we propose one based on a high-order tuner, summarized in Algorithm \ref{alg:HOT_1}. In order to accommodate the presence of noise, we consider a regularized loss function of the form\cite{gaudio2020accelerated}
\begin{equation}
	\label{eq:objective}
	f_{k+1}(\theta_k) = \frac{L_{k+1}(\theta_k)}{\N_k} + \frac{\mu}{2}\|\theta_k - \theta_0\|^2,
\end{equation}
which leads to a strongly convex objective function.
\begin{algorithm}[H]
	\caption{High Order Tuner\cite{gaudio2020accelerated}}
	\label{alg:HOT_1}
	\begin{algorithmic}[1]
		\STATE {\bfseries Input:} initial conditions $\theta_0$, $\vartheta_0$, gains $\gamma$, $\beta$, $\mu$
		\FOR{$k=0,1,2,\ldots$}
		\STATE \textbf{Receive} regressor $\phi_k$, output $y_{k+1}$
		\STATE Let $\N_k=1+\lVert\phi_k\rVert^2$, $\nabla L_{k+1}(\theta_k)=\phi_k(\theta_k^T\phi_k-y_{k+1})$,\\
		$\nabla f_{k+1}(\theta_k)=\frac{\nabla L_{k+1}(\theta_k)}{\N_k}+\mu(\theta_k-\theta_0)$,\\ $\bar{\theta}_{k}=\theta_k-\gamma\beta\nabla f_{k+1}(\theta_k)$
		\STATE $\theta_{k+1}\leftarrow\bar{\theta}_{k}-\beta(\bar{\theta}_{k}-\vartheta_k)$
		\STATE Let $\nabla L_{k+1}(\theta_{k+1})=\phi_k(\theta_{k+1}^T\phi_k-y_{k+1})$,\\
		$\nabla f_{k+1}(\theta_{k+1})=\frac{\nabla L_{k+1}(\theta_{k+1})}{\N_k}+\mu(\theta_{k+1}-\theta_0)$
		\STATE $\vartheta_{k+1}\leftarrow\vartheta_k-\gamma \nabla f_{k+1}(\theta_{k+1})$
		\ENDFOR
	\end{algorithmic}
\end{algorithm}

The specific updates that define the evolution of $\theta_k$ are summarized as
\begin{align}
	\vartheta_{k+1} &= \vartheta_k - \gamma \nabla f_{k+1}(\theta_{k+1}),\label{eq:4}\\
	\bar{\theta}_{k} &= \theta_k - \gamma\beta\nabla f_{k+1}(\theta_k),\label{eq:5}\\
	\theta_{k+1} &= \bar{\theta}_{k} - \beta(\bar{\theta}_{k} - \vartheta_k),\label{eq:6}
\end{align}
where $\nabla f_{k+1}(\theta_k) = \frac{\nabla L_{k+1}(\theta_k)}{\N_k} + \mu(\theta_k - \theta_0)$. It should be noted that \eqref{eq:4}-\eqref{eq:5} represent a high order tuner in two steps, and $\N_k$ includes the Hessian of the loss function. A similar approach is commonly utilized in adaptive control, leading to the well known $\sigma$-modification \cite{Ioannou1996}. Since the gradient of the loss function can be determined only as in \eqref{eq:noisefree_gradient}, the two gradients in \eqref{eq:4} and \eqref{eq:5} take the form
\begin{align}
	\nabla L_{k+1}(\theta_k)&=\phi_k(\theta_k^T\phi_k- {\theta^*}^T\phi_k-\eta_{k+1}), \label{eq:modified_HOT_algorithm_1}\\
	\nabla L_{k+1}(\theta_{k+1})&=\phi_k(\theta_{k+1}^T\phi_k- {\theta^*}^T\phi_k-\eta_{k+1}). \label{eq:modified_HOT_algorithm_2}
\end{align}
In the following two sections, we analyze the stability and convergence properties of \eqref{eq:4}-\eqref{eq:modified_HOT_algorithm_2}.

\section{Stability and convergence}
\label{sec:stability}
In this section, we will show the stability of the high-order tuner algorithm in the presence of noise. First we show the boundedness of the parameters $\vartheta_k$ and $\theta_k$ in Section \ref{sec:bound}. Then based on the analysis for the parameter boundedness, we show that the exponential convergence rate in Section \ref{sec:converge}.
\subsection{Stability of the High-Order Tuner Algorithm}
\label{sec:bound}

\begin{theorem}
	For the stochastic linear regression model with output error $e_{y,k+1}$ as defined in \eqref{eq:noisefree_errormodel}, and assuming \eqref{eq:mean}, \eqref{eq:variance}, \eqref{eq:meanB} and \eqref{eq:varianceB} are satisfied, Algorithm \ref{alg:HOT_1} in \eqref{eq:4}-\eqref{eq:modified_HOT_algorithm_2} with $0<\mu<1$, $0<\beta<1$, $0<\gamma\leq \frac{\beta(2-\beta)}{16+\beta^2+\mu\left(\frac{57\beta+1}{16\beta}\right)}$ results in $\vartheta-\theta^*\in\ell_{\infty}$, $\theta-\vartheta\in\ell_{\infty}$, with probability one.
	\label{theo:3}
\end{theorem}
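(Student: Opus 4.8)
The plan is to build a stochastic Lyapunov function in the error coordinates $\tilde{\vartheta}_k=\vartheta_k-\theta^*$ and $\tilde{\theta}_k=\theta_k-\theta^*$ (so that $\vartheta_k-\theta_k=\tilde{\vartheta}_k-\tilde{\theta}_k$ and $\theta_k-\theta_0=\tilde{\theta}_k+(\theta^*-\theta_0)$) and to establish a one-step drift inequality of the form $E\!\left(V_{k+1}\mid\mathcal{F}_k\right)-V_k\le-\alpha V_k+C$ with $0<\alpha<1$ and $C<\infty$ depending only on $d_{max},\sigma_{max},\mu$ and $\norm{\theta^*-\theta_0}$; almost-sure boundedness of the two sequences then follows because this makes $\{V_k\}$ behave as a nonnegative supermartingale outside the compact sublevel set $\{V\le C/\alpha\}$, which is exactly the mechanism underlying Theorem~\ref{thm:stochasticstable} (cf.\ \cite{QIN2020}). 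A preliminary observation makes the conditional expectation tractable: from \eqref{eq:modified_HOT_algorithm_1}, $\nabla f_{k+1}(\theta_k)=\frac{\phi_k\phi_k^T\tilde{\theta}_k}{\N_k}-\frac{\phi_k\eta_{k+1}}{\N_k}+\mu\tilde{\theta}_k+\mu(\theta^*-\theta_0)$, and combining \eqref{eq:5}--\eqref{eq:6} gives $\theta_{k+1}=(1-\beta)\theta_k+\beta\vartheta_k-\gamma\beta(1-\beta)\nabla f_{k+1}(\theta_k)$; hence $\theta_{k+1}$, $\tilde{\theta}_{k+1}$, $\nabla f_{k+1}(\theta_{k+1})$ (via \eqref{eq:modified_HOT_algorithm_2}) and $\vartheta_{k+1}$ (via \eqref{eq:4}) are all \emph{affine} in $\eta_{k+1}$ with $\mathcal{F}_k$-measurable coefficients, so $V_{k+1}$ is quadratic in $\eta_{k+1}$ and conditioning introduces only $d_{k+1}$ and $\sigma_{k+1}^2$, which are bounded by \eqref{eq:meanB}--\eqref{eq:varianceB}.

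For the Lyapunov candidate I would take $V_k=\norm{\tilde{\vartheta}_k}^2+w\,\norm{\vartheta_k-\theta_k}^2$ for a fixed $w>0$; the natural choice is $w=\tfrac{1}{\beta(2-\beta)}$, because $\vartheta_{k+1}-\theta_{k+1}=(1-\beta)(\vartheta_k-\theta_k)-\gamma\big[\nabla f_{k+1}(\theta_{k+1})-\beta(1-\beta)\nabla f_{k+1}(\theta_k)\big]$ produces a contraction factor $(1-\beta)^2$ on $\norm{\vartheta_k-\theta_k}^2$, so that $w\big((1-\beta)^2-1\big)=-1$ and this term contributes a clean $-\norm{\vartheta_k-\theta_k}^2$ to the drift. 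Expanding $\norm{\tilde{\vartheta}_{k+1}}^2$ via \eqref{eq:4} and the expression for $\nabla f_{k+1}(\theta_{k+1})$, the cross term $-2\gamma\,\tilde{\vartheta}_k^T\nabla f_{k+1}(\theta_{k+1})$ yields (after substituting $\tilde{\theta}_{k+1}$ and using $\tilde{\theta}_k=\tilde{\vartheta}_k+(\theta_k-\vartheta_k)$) a strong-convexity contribution of order $-\gamma\mu\norm{\tilde{\vartheta}_k}^2$ and a data-fidelity contribution of order $-\tfrac{\gamma}{\N_k}(\phi_k^T\tilde{\vartheta}_k)^2$, while the remaining contributions are either $O(\gamma^2)$ or are cross terms carrying a single power of $\gamma$, together with the terms linear and quadratic in $\eta_{k+1}$.

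Taking $E(\cdot\mid\mathcal{F}_k)$ and using $\tfrac{\norm{\phi_k}^2}{\N_k}\le1$, the $\eta_{k+1}$-quadratic terms are bounded by $O(\gamma^2\sigma_{max}^2)$, and every $\eta_{k+1}$-linear term, after conditioning, is at most $O(\gamma d_{max})$ times an $\mathcal{F}_k$-measurable norm, which Young's inequality splits into a small multiple of $\norm{\tilde{\vartheta}_k}^2$ or $\norm{\vartheta_k-\theta_k}^2$ plus an $O(\gamma d_{max}^2)$ constant; the $\sigma$-modification cross terms carrying $\mu(\theta^*-\theta_0)$ are handled the same way. The content of the step-size condition $0<\gamma\le\frac{\beta(2-\beta)}{16+\beta^2+\mu\left(\frac{57\beta+1}{16\beta}\right)}$ is precisely that, after absorbing all these $O(\gamma^2)$ and cross terms into the clean negatives above, the residual quadratic form in $(\tilde{\vartheta}_k,\vartheta_k-\theta_k,\phi_k)$ remains negative semidefinite; the $\mu$-dependence of the bound comes from controlling the $O(\gamma^2\mu^2)$ terms generated by the $\sigma$-modification. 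One is then left with
\[
E\!\left(V_{k+1}\mid\mathcal{F}_k\right)-V_k\;\le\;-c_1\frac{(\phi_k^T\tilde{\theta}_k)^2}{\N_k}-c_2\,\mu\,\norm{\tilde{\theta}_k}^2-c_3\,\norm{\vartheta_k-\theta_k}^2+C ,
\]
with $c_1,c_2,c_3>0$ and $C=C(d_{max},\sigma_{max},\mu,\norm{\theta^*-\theta_0})<\infty$. Discarding the first (nonpositive) term and using $\norm{\tilde{\vartheta}_k}^2\le2\norm{\tilde{\theta}_k}^2+2\norm{\vartheta_k-\theta_k}^2$ to trade $\norm{\tilde{\theta}_k}^2$ for $\norm{\tilde{\vartheta}_k}^2$, one obtains $E(V_{k+1}\mid\mathcal{F}_k)-V_k\le-\alpha V_k+C$ for a suitable $\alpha\in(0,1)$. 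This shows $V_k$ is a supermartingale whenever $V_k>C/\alpha$, and a routine nonnegative-supermartingale/stopping-time argument (as in \cite{QIN2020} and underlying Theorem~\ref{thm:stochasticstable}) gives that $\{V_k\}$ is bounded almost surely, i.e.\ $\vartheta-\theta^*\in\ell_{\infty}$ and $\theta-\vartheta\in\ell_{\infty}$ with probability one.

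The main obstacle is the derivation of the drift inequality itself: two coupled sequences $\vartheta_k$ and $\theta_k$ evolve through a two-step, Hessian-normalized (second-order) update, and $\theta_{k+1}$ --- which enters the $\vartheta$-update through $\nabla f_{k+1}(\theta_{k+1})$ --- is itself correlated with $\eta_{k+1}$, so one must first expose the $\eta_{k+1}$-affine structure and only then condition, after which a somewhat involved quadratic-form estimate in three grouped variables must be pushed through. Making the numerical constants line up so that the stated $\gamma$-bound certifies negativity is the delicate part, and it is what dictates the precise form of the step-size condition.
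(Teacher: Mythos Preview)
Your approach is essentially the paper's: a quadratic stochastic Lyapunov function in $(\vartheta_k-\theta^*,\theta_k-\vartheta_k)$, a one-step drift inequality obtained after exposing the affine dependence on $\eta_{k+1}$ and conditioning, and then a supermartingale/shifted-Lyapunov argument to conclude almost-sure boundedness. Two differences are worth flagging.

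First, the Lyapunov weights. The paper uses $V_k=\tfrac{1}{\gamma}\lVert\vartheta_k-\theta^*\rVert^2+\tfrac{1}{\gamma}\lVert\theta_k-\vartheta_k\rVert^2$, i.e.\ equal weights, and the precise step-size bound $\gamma\le \frac{\beta(2-\beta)}{16+\beta^2+\mu\left(\frac{57\beta+1}{16\beta}\right)}$ in the statement is calibrated to \emph{that} choice: it is exactly what makes the noise-free residual quadratic form collapse into the negative completed squares the paper displays. Your weight $w=1/(\beta(2-\beta))$ changes the quadratic form, so you cannot simply assert that the \emph{same} numerical bound on $\gamma$ certifies negativity; either revert to equal weights or re-derive the bound for your $w$. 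This is the only place where your outline has a genuine gap.

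Second, the form of the drift inequality. The paper does not absorb the $d_{max}$-terms via Young's inequality; it keeps them as $c_2\sqrt{V_k}$ and arrives at $E(V_{k+1}\mid\mathcal{F}_k)-V_k\le -c_1V_k+c_2\sqrt{V_k}+\hat c$, then defines a shifted function $\hat V_k=\max\{V_k-K,0\}$ with $K$ the largest root of $-c_1x+c_2\sqrt{x}+\hat c=0$ and runs a three-case analysis to show $E(\hat V_{k+1}\mid\mathcal{F}_k)\le\hat V_k-\psi(x_k)$ before invoking Theorem~\ref{thm:stochasticstable}. Your route---absorb $c_2\sqrt{V_k}$ into $-c_1V_k$ by Young's to get $-\alpha V_k+C$ directly---is cleaner and perfectly valid; just note that because the usable negative on $\lVert\tilde\vartheta_k\rVert^2$ is only $O(\gamma\mu)$, the resulting constant is $O(d_{max}^2/\mu)$ rather than the $O(\gamma d_{max}^2)$ you wrote.

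Finally, the sentence ``the residual quadratic form \ldots\ remains negative semidefinite'' is doing all the heavy lifting: in the paper this is a several-page computation (the appendix) that simultaneously organizes the noise-free terms into negative completed squares and isolates the noise terms, and it is where the specific step-size constant comes from. An outline that defers this step is fine as a plan, but it is not yet a proof.
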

\begin{proof}
	Consider the candidate Lyapunov function
	\begin{equation}
		\label{eq:lyap}
		V(x_k) = V_k=\frac{1}{\gamma}\lVert \vartheta_k-\theta^*\rVert^2+\frac{1}{\gamma}\lVert \theta_k-\vartheta_k\rVert^2,
	\end{equation}
	where $x_k = [\theta_k^\top, \vartheta_k^\top]^\top$. We can expand the increment $\Delta V_k:=V_{k+1}-V_k$ and take expectation using \eqref{eq:noisefree_errormodel}, \eqref{eq:noisefree_loss}, and Algorithm \ref{alg:HOT_1} to obtain (see the Appendix for details)
	\begin{align}
		E&(V_{k+1} | \mathcal{F}_k) - V_k\nonumber\\
		\leq &-\underbrace{\frac{10}{16}\mu\gamma\beta}_{c_1}V_k + \underbrace{\frac{19609}{6144} d_{max}}_{c_2} \sqrt{V_k}\nonumber \\ 
		&+ \underbrace{\mu\left(\frac{3570\beta+896}{224\beta}\right)\lVert\theta^*-\theta_0\rVert^2}_{c_3}\nonumber \\
		&+\underbrace{\frac{67}{256}d_{max} \norm{ \theta_0} +\frac{1}{8} d_{max} \norm{(\theta^* - \theta_0)}+ \frac{15001}{1536} d_{max} \norm{\theta^{*}}}_{c_4}\nonumber\\
		&+ \underbrace{4 \gamma \beta \sigma_{max}^2 \left|1 - \frac{3}{2}\beta\right|+ 2 (1 - \beta) \gamma \beta \sigma_{max}^2 +2\gamma \sigma_{max}^2}_{c_5} .\label{eq:7}
	\end{align}
	Defining the compact set $D$ as
	\begin{equation*}
		D = \left\{V\middle |\, V\leq K\right\}
	\end{equation*}
	where $\hat{c} = c_3 + c_4 + c_5$ and $K$ is the greatest positive real root of $-c_1 x + c_2\sqrt{x} + \hat{c} = 0$, the inequality in \eqref{eq:7} can be restated as
	\begin{equation}
		E(V_{k+1} | \mathcal{F}_k) - V_k < 0,
		\label{eq:8}
	\end{equation}
	in $D^c$. In order to show the properties of $V_k$ outside of the compact set $D$, we consider a new function $\hat{V}_k$, defined as
	\begin{equation}
		\label{eq:10}
		\hat{V}(x_k) = \hat{V}_k = \left\{
		\begin{array}{ll}
			V_k - K & \quad V_k > K\\
			0 & \quad V_k \leq K
		\end{array}
		\right.
	\end{equation}
	It is obvious that $\hat{V}$ is a nonnegative, continuous, and radially unbounded function. Now we show that
	\begin{equation}
		E(\hat{V}_{k+1} | \mathcal{F}_k) - \hat{V}_k \leq -\psi(x_k), \,\, \forall x_k \in \mathbb{R}^n,
		\label{eq:1}
	\end{equation}
	where $\psi$ is a continuous positive semi-definite function. 
	
	Let $\psi$ be defined as the following:
	\begin{equation}
		\psi(x_k) = \left\{
		\begin{array}{ll}
			c_1 V(x_k) - c_2 \sqrt{V(x_k)} - \hat{c} - K& \quad V(x_k)> T\\
			0 & \quad V(x_k) \leq T
		\end{array}
		\right.
		\label{eq:9}
	\end{equation}
	and $T$ as
	\begin{equation*}
		T =  \frac{c_2^2 + 2 c_1 (\hat{c}+K) + \sqrt{c_2^4 + 4c_1 c_2^2 (\hat{c}+K)}}{2c_1^2},
	\end{equation*}
	such that $T$ is the largest positive root of  $c_1 x -c_2 \sqrt{x} -\hat{c} - K = 0$. Since $V$ is a continuous function, it is obvious that $\psi$ is a continuous positive semi-definite function. We consider three separate cases to conclude the proof:
	\begin{enumerate}
		\item $V_k \leq K$: This means $\hat{V_k} = 0$ and $-\psi(x_k) =0$. From \eqref{eq:8}, it follows that $E(V_{k+1} | \mathcal{F}_k) \leq (1-c_1) V_k + c_2 \sqrt{V_k} + \hat{c}$. Since $1-c_1>0$, we can use the upper bound $K$ to conclude that $E(V_{k+1} | \mathcal{F}_k) \leq (1-c_1) K + c_2 \sqrt{K} + \hat{c} = K$. Since the $E(V_{k+1} | \mathcal{F}_k) \leq K$, we know $E(\hat{V}_{k+1} | \mathcal{F}_k) = 0$. Finally we arrive at $E(\hat{V}_{k+1} | \mathcal{F}_k) - \hat{V}_k = 0$. From the definition in \eqref{eq:9}, \eqref{eq:1} is proved.
		\item $K < V_k \leq T$: This implies that $\hat{V}_k >0$ and $-\psi(x_k) = 0$. We know from before that $E(V_{k+1} | \mathcal{F}_k) - V_k \leq -c_1 V_k + c_2 \sqrt{V_k} + \hat{c}$. Since $V_k > K$, we know that $-c_1 V_k + c_2 \sqrt{V_k} + \hat{c} < 0$. Thus $E(V_{k+1} | \mathcal{F}_k) - V_k <0$. Algebraic manipulations lead us to the inequality $E(V_{k+1} - K | \mathcal{F}_k) - \hat{V}_k <0$. Also, from the definition of $\hat{V}_k$, we can conclude that $E(\hat{V}_{k+1} | \mathcal{F}_k) = \max\{E(V_{k+1} - K | \mathcal{F}_k), 0\}$. Since in this case $\hat{V}_k > 0$ and $\hat{V}_k >E(V_{k+1} - K | \mathcal{F}_k)$, it follows that $E(\hat{V}_{k+1} | \mathcal{F}_k) - \hat{V}_k < 0$. From \eqref{eq:9}, it follows that \eqref{eq:1} holds.
		\item $T < V_k$: This means $\hat{V}_k >0$ and $-\psi(x_k) = -c_1 V(x_k) + c_2 \sqrt{V_k} + \hat{c} +K$. We know from before that $E(V_{k+1} | \mathcal{F}_k) - V_k \leq -c_1 V(x_k) + c_2 \sqrt{V_k} + \hat{c}$. We can then conclude, $E(V_{k+1} | \mathcal{F}_k) - \hat{V_k} \leq -c_1 V(x_k) + c_2 \sqrt{V_k} + \hat{c} + K < 0$, because $V_k > T > K$. Since $\hat{V}_k \leq V_k \;\; \forall k$, we finally arrive at $E(\hat{V}_{k+1} | \mathcal{F}_k) - \hat{V_k} \leq -c_1 V(x_k) + c_2 \sqrt{V_k} + \hat{c} + K$, proving \eqref{eq:1} as well.
	\end{enumerate}
	
	If we define $E_\lambda=\{x | V(x)<\lambda\}$, it follows that \eqref{eq:1} holds for all $E_\lambda$. Since $V$ is radially unbounded, Theorem \ref{thm:stochasticstable} and \ref{thm:stochasticrate} can be applied to conclude that $x_k$ converges to the set $\{x | \psi(x_k)=0\}$ with probability one. Since $\psi(x_k) = 0$ means that $V(x_k) \leq T$, we therefore obtain $\vartheta - \theta^* \in \ell_\infty$ and $\theta - \vartheta \in \ell_\infty$ with probability one.
\end{proof}

\subsection{Convergence Properties of the High Order Tuner}
\label{sec:converge}
In this section, we consider the convergence properties of Algorithm \ref{alg:HOT_1}. In particular, it will be shown that the algorithm in \ref{alg:HOT_1} converges to a compact set exponentially fast.
\begin{theorem}
	For the stochastic linear regression model in \eqref{eq:noisefree_errormodel}, if noise assumptions \eqref{eq:mean}-\eqref{eq:varianceB} are satisfied, Algorithm \ref{alg:HOT_1} with $0<\mu<1$, $0<\beta<1$, $0<\gamma\leq \frac{\beta(2-\beta)}{16+\beta^2+\mu\left(\frac{57\beta+1}{16\beta}\right)}$ results in the sequence $V(x_k)$ converging exponentially fast to a compact set defined as  $M = \left\{V  \middle|V \leq K \right\}$, where $K = \max\left\{\frac{c_2^2}{(\alpha-c_1)^2}, \frac{\hat{c}}{\alpha}\right\}$, at a rate no slower than $1-\alpha$, for any $0<\alpha<c_1$.
	\label{theo:4}
\end{theorem}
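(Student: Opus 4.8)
My plan is to extract everything from the one-step drift estimate already produced in the proof of Theorem~\ref{theo:3}: under the stated bounds on $\mu,\beta,\gamma$, the chain of inequalities leading to \eqref{eq:7} gives, for every $k$ and every state,
\[
  E(V_{k+1}\mid\mathcal{F}_k) - V_k \;\le\; -c_1 V_k + c_2\sqrt{V_k} + \hat c ,
\]
with $c_1 = \tfrac{10}{16}\mu\gamma\beta$ and $\hat c = c_3+c_4+c_5$ the noise- and offset-dependent constant of \eqref{eq:7}. No new Lyapunov computation is required; the whole theorem is a matter of re-processing this single inequality.

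The first step is to turn the indefinite term $c_2\sqrt{V_k}$ into a geometric contraction outside the target set. I would fix $0<\alpha<c_1$ (legitimate, since the step-size bound forces $c_1<1$) and recall $K=\max\{c_2^2/(\alpha-c_1)^2,\ \hat c/\alpha\}$. Using the elementary bound $c_2\sqrt{V_k}\le(c_1-\alpha)V_k$ whenever $\sqrt{V_k}\ge c_2/(c_1-\alpha)$ --- which holds as soon as $V_k\ge K$ --- together with $\hat c\le\alpha K$, I obtain
\[
  V_k\ge K \;\Longrightarrow\; E(V_{k+1}\mid\mathcal{F}_k)\;\le\;(1-\alpha)V_k+\hat c\;\le\;(1-\alpha)V_k+\alpha K ,
\]
hence $E(V_{k+1}\mid\mathcal{F}_k)-K\le(1-\alpha)(V_k-K)$ on $\{V_k\ge K\}$. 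A short computation of the same type as in Case~1 of the proof of Theorem~\ref{theo:3} shows in addition that $(1-c_1)K+c_2\sqrt K+\hat c\le K$ for this $K$, so that $V_k\le K$ forces $E(V_{k+1}\mid\mathcal{F}_k)\le K$.

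The second step globalises these estimates as in Theorem~\ref{theo:3}. I would pass to the shifted function $\hat V_k=\max\{V_k-K,\,0\}$, which is continuous, nonnegative and radially unbounded, and use $E(\hat V_{k+1}\mid\mathcal{F}_k)=\max\{E(V_{k+1}\mid\mathcal{F}_k)-K,\,0\}$ exactly as in that proof. On $\{V_k\le K\}$ the bound $E(V_{k+1}\mid\mathcal{F}_k)\le K$ gives $E(\hat V_{k+1}\mid\mathcal{F}_k)=0=(1-\alpha)\hat V_k$; on $\{V_k>K\}$ the contraction above gives $E(\hat V_{k+1}\mid\mathcal{F}_k)\le(1-\alpha)\hat V_k$. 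Hence $E(\hat V_{k+1}\mid\mathcal{F}_k)-\hat V_k\le-\alpha\hat V_k$ for all $k$, and Theorem~\ref{thm:stochasticrate}(1) applied to $\hat V_k$ yields $\hat V_k\to0$ exponentially at a rate no slower than $1-\alpha$, which is exactly the assertion that $V(x_k)$ converges exponentially fast to $M=\{V\le K\}$.

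The main obstacle I anticipate is the calibration of $K$: it has to be chosen large enough that the cross term $c_2\sqrt{V_k}$ and the additive constant $\hat c$ --- which encode, respectively, the mean bound $d_{max}$ and the combination of $\sigma_{max}^2$ with the offset $\|\theta^*-\theta_0\|$ --- are both absorbed while the contraction factor stays at exactly $1-\alpha$; this is what dictates the two-term maximum in the definition of $K$ and the restriction $\alpha<c_1$. The only other point requiring care is the bookkeeping for $\hat V_k$ near the boundary $V_k=K$, where one must rely on the conditional estimate $E(V_{k+1}\mid\mathcal{F}_k)\le K$ rather than on any almost-sure control of $V_{k+1}$, since the noise is only bounded in conditional mean and variance.
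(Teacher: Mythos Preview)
Your proposal is correct and follows essentially the same argument as the paper: starting from the drift bound \eqref{eq:7}, you absorb $c_2\sqrt{V_k}$ via the choice $\alpha<c_1$ to obtain a contraction outside $\{V\le K\}$ with $K=\max\{c_2^2/(c_1-\alpha)^2,\hat c/\alpha\}$, pass to $\hat V_k=\max\{V_k-K,0\}$, establish $E(\hat V_{k+1}\mid\mathcal F_k)\le(1-\alpha)\hat V_k$ by the same two-case split on $\{V_k\le K\}$ versus $\{V_k>K\}$, and invoke Theorem~\ref{thm:stochasticrate}. The only cosmetic difference is that in the case $V_k>K$ the paper further splits on whether $V_{k+1}>K$, while you quote the identity $E(\hat V_{k+1}\mid\mathcal F_k)=\max\{E(V_{k+1}\mid\mathcal F_k)-K,0\}$ directly from the proof of Theorem~\ref{theo:3}.
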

\begin{proof}
	Consider the same Lyapunov function as in \eqref{eq:lyap}. Since the conditions of Theorem \ref{theo:3} are identical to Theorem \ref{theo:4}, equations \eqref{eq:7}, \eqref{eq:8} and \eqref{eq:10} still hold. From the choice of $\alpha$, it follows that whenever $V_k > \frac{c_2^2}{(\alpha-c_1)^2}$, the following holds,
	\begin{equation*}
		(\alpha-c_1) V_k + c_2 \sqrt{V_k} \leq 0
	\end{equation*}
	From this we can conclude that
	\begin{equation*}
		\begin{split}
			E(V_{k+1} | \mathcal{F}_k) - V_k \leq -\alpha V_k + \hat{c}, \quad\forall\, V_k\in S^c
		\end{split}
	\end{equation*}
	where
	\begin{equation*}
		S = \left\{V\middle|V\leq \frac{c_2^2}{(\alpha-c_1)^2}\right\}
	\end{equation*}
	
	We can additionally conclude that $E(V_{k+1} | \mathcal{F}_k) - V_k<0$ whenever $V_k$ is in $S^c \cap F^c$ where
	\begin{equation*}
		F = \left\{V\middle|V\leq \frac{\hat{c}}{\alpha} \right\}
	\end{equation*}
	Since $S^c \cap F^c = (S \cup F)^c$ by Demorgan's law, it follows that if
	\begin{equation*}
		G = \left\{V\middle|V\leq \frac{c_2^2}{(\alpha-c_1)^2} \lor V\leq \frac{\hat{c}}{\alpha} \right\}
	\end{equation*}
	and $K = \max\{\frac{c_2^2}{(\alpha-c_1)^2}, \frac{\hat{c}}{\alpha}\}$, $G$ is equivalent to 
	\begin{equation*}
		G = \left\{V\middle|V\leq K \right\}
	\end{equation*}
	Define $\hat{V}(x_k)$ as in \eqref{eq:10}. Now we show that
	\begin{equation}
		E(\hat{V}_{k+1} | \mathcal{F}_k) - \hat{V}_k \leq -\alpha \hat{V}_k,\quad \forall k,
		\label{eq:2}
	\end{equation}
	by considering two cases:
	\begin{enumerate}
		\item $\hat{V}_k > 0$:
		This means that $V_k > K$. Thus $V_k \in G^c$, which means that $E(V_{k+1} | \mathcal{F}_k) - V_k \leq -\alpha V_k + \hat{c}$. Since in this case $\hat{V}_k = V_k -K $, it follows that $E(V_{k+1} | \mathcal{F}_k) - V_k \leq -\alpha \hat{V}_k - \alpha K + \hat{c}$. Additionally, since $K \geq \frac{\hat{c}}{\alpha}$, it follows that $E(V_{k+1} | \mathcal{F}_k) - V_k \leq -\alpha \hat{V}_k$. By adding and subtracting $K$, we get $E(V_{k+1} - K| \mathcal{F}_k) - (V_k-K) \leq  -\alpha \hat{V}_k$. Since $\hat{V}_k > 0$, we have $E(V_{k+1} - K| \mathcal{F}_k) - \hat{V}_k \leq  -\alpha \hat{V}_k$. We now consider two subcases depending on $V_{k+1}$:
		\begin{enumerate}
			\item $V_{k+1} > K$: In this case, $E(\hat{V}_{k+1}| \mathcal{F}_k) = E(V_{k+1} - K| \mathcal{F}_k)$. Hence $E(\hat{V}_{k+1} | \mathcal{F}_k) - \hat{V}_k \leq  -\alpha \hat{V}_k$.
			\item $V_{k+1} \leq K$: In this case, $\hat{V}_{k+1} =0$. This implies that $E(\hat{V}_{k+1} | \mathcal{F}_k) =0$ and $E(\hat{V}_{k+1} | \mathcal{F}_k) - \hat{V}_k \leq -\alpha \hat{V}_k$ because $\alpha < 1$.
		\end{enumerate}
		
		Therefore we conclude in case 1) that $E(\hat{V}_{k+1} | \mathcal{F}_k) - \hat{V}_k \leq  -\alpha \hat{V}_k$.
		\item $\hat{V}_k = 0$: This means $V_k \leq K$ We know that $E(V_{k+1} | \mathcal{F}_k) - V_k \leq -c_1 V_k + c_2 \sqrt{V_k} + \hat{c}$. We can conclude then that $E(V_{k+1} | \mathcal{F}_k) \leq (1-c_1) V_k + c_2 \sqrt{V_k} + \hat{c}$. Since we also have that $1-c_1 > 0$, we can conclude that $E(V_{k+1} | \mathcal{F}_k) \leq (1-c_1) K + c_2 \sqrt{K} + \hat{c}$. Since $K \geq \frac{c_2^2}{(\alpha-c_1)^2}$, we can conclude that $-c_1 K + c_2 \sqrt{K} \leq -\alpha K$. Thus we get $E(V_{k+1} | \mathcal{F}_k) \leq -\alpha K + \hat{c} +K$. And as shown above, $-\alpha K \leq - \hat{c}$. Thus, $E(V_{k+1} | \mathcal{F}_k) \leq K$. This then implies that $E(\hat{V}_{k+1} | \mathcal{F}_k) = 0$. Thus once again we get $E(\hat{V}_{k+1} | \mathcal{F}_k) - \hat{V}_k \leq -\alpha \hat{V}_k$.
	\end{enumerate}
	
	This implies that $E(\hat{V}(x_{k+1} | \mathcal{F}_k) - \hat{V}(x_k) \leq -\alpha \hat{V}(x_k)$, $\forall x_k \in \mathbb{R}^n$. We therefore apply Theorem \ref{thm:stochasticrate} to guarantee that $\hat{V}(x_k)$ converges to zero exponentially at a rate no slower than $1-\alpha$. This implies that $V(x_k)$ converges to the set $G$ at the same exponential rate. 
\end{proof}

It should be noted that both Theorems \ref{theo:3} and \ref{theo:4} are established through the use of a regularized loss function as in \eqref{eq:objective}. This in turn enabled boundedness and convergence to a compact set without requiring any persistent excitation, similar to $\sigma$-modification. This implies that the proposed  HT in Algorithm \ref{alg:HOT_1}, from \cite{gaudio2020accelerated} is expected to retain all of the stability and robustness properties that have been established with modified gradient-based adaptive laws in the adaptive control literature. 


\section{Conclusions}
\label{sec:conclusion}
In this paper, we explore a high-order tuner that has been shown to result in stability with time-varying regressors in linearly parametrized systems and accelerated convergence with constant regressors\cite{gaudio2020accelerated}. We have shown that this tuner to provides bounded parameter estimates even the gradients are corrupted by noise. A quadratic loss function with a regularization term that renders the function strongly convex is utilized in order to ensure the stability and convergence properties. A straight forward extension of the result obtained to the deterministic case when bounded external disturbances are present, and time-varying parameters are present can be carried out. By leveraging recent results \cite{moreu20}, an extension to general convex functions that are non-quadratic appears to be feasible as well. The problem of accelerated convergence in the presence of time-varying regressors, bounded parameter estimation for non-convex loss functions, and parameter convergence in the presence of excitation conditions are all topics for future investigations.

\section{appendix}
\label{sec:appendix}
\begin{proof}
	Consider the candidate Lyapunov function stated as
	\begin{equation}
		V(x_k) = V_k=\frac{1}{\gamma}\lVert \vartheta_k-\theta^*\rVert^2+\frac{1}{\gamma}\lVert \theta_k-\vartheta_k\rVert^2.
	\end{equation}
	Where $x_k = [\theta_k^T, \vartheta_k^T]^T$. The increment $\Delta V_k:=V_{k+1}-V_k$ may then be expanded using \eqref{eq:noisefree_errormodel}, \eqref{eq:noisefree_loss}, and Algorithm \ref{alg:HOT_1} as
	\begingroup
	\allowdisplaybreaks

	
	\endgroup
	For notational simplicity, write $\hat{c} = c_3 + c_4 + c_5$. From the bound on the conditional expectation, it can be noted that $E(V_{k+1} | \mathcal{F}_k) - V_k<0$ in $D^c$, where the compact set $D$ is defined as
  \begin{equation*}
    D = \left\{V\middle|V\leq \frac{c_2^2 + 2 c_1 \hat{c} + \sqrt{c_2^4 + 4c_1 c_2^2 \hat{c}}}{2c_1^2}  = K\right\}
  \end{equation*}
  where $K$ is the greatest positive real root of $-c_1 x + c_2\sqrt{x} + \hat{c}$.
\end{proof}

\bibliographystyle{IEEEtran}
\bibliography{IEEEabrv,References}
\end{document}